\newcommand{\A}{\mathcal A}
\newcommand{\E}{\mathbb E}
\newcommand{\X}{\mathcal X}
\newcommand{\Y}{\mathcal Y}
\newcommand{\N}{\mathcal N}
\newcommand{\htheta}{\hat\theta}
\DeclareMathOperator*{\argmax}{arg\,max}
\newtheorem{theorem}{Theorem}
\title{Regression with n$\to$1 by Expert Knowledge Elicitation}
\author{Marta Soare}
\author{Muhammad Ammad-ud-din}  
\author{Samuel Kaski}
\affil{Helsinki Institute for Information Technology HIIT,\\
Department of Computer Science, Aalto University \\
\texttt{first.last@aalto.fi}}
\date{}
\begin{document}

\maketitle

\begin{abstract}
  
We consider regression under the ``extremely small $n$ large $p$'' condition. In particular, we focus on problems with so small sample sizes $n$ compared to the dimensionality $p$, even $n\to 1$, that predictors cannot be estimated without prior knowledge. Furthermore, we assume all prior knowledge that can be automatically extracted from databases has already been taken into account. This setup occurs in personalized medicine, for instance, when predicting treatment outcomes for an individual patient based on noisy high-dimensional genomics data. A remaining source of information is expert knowledge which has received relatively little attention in recent years. We formulate the inference problem of asking expert feedback on features on a budget, present experimental results for two setups: ``small $n$'' and ``n=1 with similar data available'', and derive conditions under which the elicitation strategy is optimal. Experiments on simulated experts, both on simulated and genomics data, demonstrate that the proposed strategy can drastically improve prediction accuracy.
 
\end{abstract}

\medskip

\section{Introduction}

In the ``small $n$ large $p$'' regression setting, the task is to make predictions based on few noisy samples of high-dimensional data. It is impossible to address this problem without relying on prior knowledge. Typically, prior knowledge is represented by known structures in data, such as groupings of variables in pathways, or sparsity. Still, for the extreme case of $n \to 1$, none of them is sufficient, and not even their combination. In this case, a potential source of additional useful knowledge comes from human expertise, which is usually expensive to extract. In this paper we address the problem of how to efficiently elicit expert knowledge, under a restricted feedback budget, making the simplifying assumption that the user is able to provide exact information to queries. 

\subsection{Dealing with small n large p}

``Small $n$ large $p$'' data (also known as ``fat data'') is characterised by a large number of predictors $(p)$ that need to be estimated from few data (small sample size $n$). This situation is typical in medical data, where observations (such as drug responses) are very scarce, a very large number of potentially relevant covariates is available, from genomics measurements for instance, and additional data can only be obtained at a high cost. 

To tackle this problem, and to efficiently constrain the selection of relevant features, machine learning algorithms typically rely on  known structures in the data (for instance, networks, pathways, the linear structure of DNA). This type of prior knowledge can be taken into account with regularisation techniques (see, e.g.,~\cite{john@regularization,regularization}) or priors in Bayesian inference. Another efficient way to address the lack of data is to transfer knowledge across related tasks (see~\cite{PanY09TKDE} for a survey).

\subsection{Expert Knowledge Elicitation}

Human judgement, and in particular expert knowledge, is often of crucial importance in decision making processes. Expert knowledge elicitation techniques have been widely studied in a wide range of application settings, from preference model elicitation \cite{azari2013preference,busa2014preference} to 
medical science and shipping industry~\cite{flores2011incorporating,zhang2016expert}, as well as interactive learning for student modelling~\cite{conati2002using,masegosa2013interactive}. 

The methodological choices needed in defining the expert feedback differ from application to application and depend on multiple conditions, such as (i) the available expert knowledge (e.g., What can one ask from the expert? How much should the expert be trusted?); (ii) the type of information to be obtained (e.g., learning a preference, estimating a quantity, answering a question, identifying risky options); (iii) knowledge extraction constraints (e.g., time/cost needed to get the answer, how many interactions/how much feedback can the expert provide).

Eliciting coefficients for linear regression methods has been shown to be efficient in previous related studies. An important line of work \cite{garthwaite1988quantifying, kadane1980interactive} studies methods of quantifying subjective opinion about the coefficients of linear regression models. The prior knowledge is elicited through tasks that use hypothetical data and the assessment of credible intervals. These elicitation methods are shown to obtain prior distributions that represent well the expert's opinion, but the use of expert knowledge is not explored further. In our approach, we elicit expert's opinion more directly and this also allows to focus on more specific cost functions (such as reducing the prediction error for a specific target).

In Bayesian inference the prior distributions of the parameters are a natural way of expressing prior knowledge. In the studies on prior elicitation \cite{OHagan06}, some also on regression \cite{OLeary09}, the focus has often been on how to elicit reliable prior knowledge, after which the Bayesian inference machinery takes care of the rest. We ask the complementary question, of how efficiently can the knowledge elicitation be done, first given the simplifying assumption that the expert feedback is reliable. The order of inference is also reversed: we initialize from data and then improve with knowledge elicitation, whereas in prior elicitation the normal order would be the opposite. In the next stages of development, it will be important to combine both approaches, and then the Bayesian formulations will be natural.

\subsection{Contributions and Outline}

In this work we propose to solve the ``extremely small $n$, large $p$'' problem for regression under the assumption that accurate expert knowledge is available but under a budget. As far as we know, this is the first study handling knowledge elicitation from this angle, aiming at sample sizes of $n\to 1$. This setup is important in particular for personalized medicine but not restricted to it.

The remainder of the paper is organised  as follows: In Section~\ref{s:prelim} we provide a detailed description of the expert knowledge elicitation scenario in which we address the lack of data problem in a regression task. We also state our assumptions about the use of the expert feedback. We then propose an effective algorithm for selecting on which features to ask expert feedback to reduce the loss the most, in Section~\ref{s:algo}. We analyse its optimality. In Section~\ref{s:experiments} we present simulations of the behaviour of our expert knowledge elicitation strategy in a simple synthetic setting. Then, by simulating a knowledgeable user, we show the potential prediction improvement when using a genomics dataset, relevant for the personalized medicine settings. To demonstrate the more general use of the algorithm that we propose, we also study its behaviour for model extensions including: a restriction of the features on which the expert can provide feedback (Section~\ref{s:extension_subset}) and by including noisy expert feedback (Section~\ref{s:extension_noisy}). Finally, we conclude and provide future work directions in Section~\ref{s:conclusions}.

\section{Preliminaries}\label{s:prelim}

In this section we present the problem setup and introduce the first formulation for expert knowledge elicitation in a prediction task. In particular, we explicate our assumptions about the type of feedback that the expert can give. The framework was chosen to be as simple as possible while still capturing the essential elements of large $p$, small $n$ data. For concreteness, we will describe the problem with terminology of treatment effectiveness prediction, but the setup is naturally more generally applicable.

\subsection{Problem Description}

The goal is to improve prediction of the effect of a treatment on a target patient, by including feedback provided by an expert. Assume a small set of observation data, which can be used to learn an initial predictor. The set consists of $n$ observed treatment responses $y_1,\dots, y_n$, stored in the vector $\Y \in \mathbb R^{1 \times n}$, coming from $n$ different patients $i$ ($i=1,\ldots,n$) who had previously received the same treatment. Denote the matrix of genomic features with $\X$, where the size of $\X$ is $n \times p$, and on each row $i=1,\dots,n$ we have the $p$ genomic features corresponding to patient $i$, denoted by $x_i$. We focus on setups with $p\gg n$. For the new ``target'' patient the same genomic measurements are available, denoted $x^* =  [x^*(1)~\dots~ x^*(p)]$, and the goal is to predict as accurately as possible the treatment response, denoted  $y^*$.

\subsection{Data Assumptions}

\paragraph{Linear Regression.} We assume there is a linear relation between the genomic features of the patient and the expected result of the treatment. More precisely, we choose a linear regression setting, where for each patient $i$,
\begin{align}
y_i = x_i \theta ^\top  + \eta, 
\end{align}
where $\theta \in \mathbb R^p$ is an unknown parameter underlying the linear function and $\eta$ is i.i.d white noise, quantifying the inherent noise in the measurements of the drug effects for each patient. The coordinate $\theta(i)$ 
of the parameter vector encodes the weight or relevance that feature $i$ has in computing the treatment response of a patient. 


\paragraph{Sparsity.} We assume that the weight vectors $\theta$ are $s$-sparse $(s<<p)$, or in other words that many of the features have zero weight or relevance in the drug response prediction. Note that sparsity is not necessary for expert knowledge elicitation, but is a widely used regularity assumption which enables handling even larger $p$. Sparsity assumption matches many problems having a small number of responsible mechanisms (for instance, mutations in the genetic cases).



\paragraph{``Small $\textbf{n}$'' and ``$\mathbf{n=1}$'' scenarios.} An important data assumption is whether the true weight of a feature is the same for all patients (that is, do all observations $(x_1,y_1),\dots,(x_n,y_n)$ come from the same distribution), or whether the target patient may be from a different distribution. We call the former the \emph{``small $n$ scenario''} and the latter the \emph{``$n=1$ scenario''}.

A particularly useful variant of the $n=1$ scenario is the \emph{``multiple $n=1$ tasks scenario''}, where each patient has his/her own (unknown) weight vector; for $i \neq j$, $\theta_i \neq \theta_j$, and
\begin{align}\label{eq:observation_model.multitask}
y_i = x_i \theta_i^\top  + \eta ~~~\text{and}~~~ 
y_j = x_j \theta_j^\top  + \eta' \; ,
\end{align}
where $\eta$ and $\eta'$ come from $\N(0,1)$. If we are willing to assume that all considered patients share the same sparse support of active genomic features (that is, features affecting the drug response) and that their corresponding weights are similar 
from patient to patient, then we can estimate an initial estimate for the weights $\theta^*$ of the target patient from the other patients' data. 

We next explicate our assumptions about the information that can be extracted from an expert in the linear regression scenario.

\subsection{Expert Feedback}\label{s:expert_knowledge}

For clinical and behavioural variables, an expert may know how much they explain of the risk\footnote{For instance, even wikipedia gives an estimate that obesity appears to be the cause of 20\% of heart attacks.}. For linear regression, the fraction of variance explained is the square of the correlation coefficient (under simplifying assumptions). When the expert is more uncertain, he/she can give feedback on the importance of variables. For instance, some cancer genes and pathways are known, and given the patient's treatment response history, it is possible to make educated guesses of which hypotheses of disease mechanisms remain as potential hypotheses. In these cases our formulation is an approximation for bringing in expert's patient-specific prior knowledge.

\paragraph{Expert knowledge.} We assume that the expert is able to report the correct value of $\theta^*(i)$ when asked, but that answering requires a cost and hence we cannot simply ask correct values for the full $\theta^*$. This kind of feedback is very informative and, as far as we know, has not been used in estimating parameters before. This assumption is very simplifying in the personalized medicine case, and requires that the expert either has important additional knowledge of the particular patient, or is able to use his/her expertise to infer the correct value from the shown data $x^*$ and the initial weight vector estimated based on the other patients. We later relax this assumption and provide an empirical study of the sensitivity to expert errors (Section \ref{s:extension_noisy}) and to expert knowledge restricted to a subset of features (Section \ref{s:extension_subset}).

\paragraph{Feedback use.} In the simple formulation assuming accurate experts, the best way of taking into account the expert feedback on a feature $i$ is to directly replace the feature weight of the estimated target parameter $\htheta_\text{init}$  (obtained from the data of other patients) with the target-specific weight provided by the expert. Basically, if the expert gives feedback on $\theta^*(i)$, then we update the initial ``small $n$'' estimate (denoted $\htheta_\text{init}$) by replacing its $i$-th coordinate with the feedback provided by the expert.

\paragraph{Feedback cost.} We assume this type of expert knowledge to be very expensive and we hence place a strict restriction on the number of features on which the expert can provide feedback. Denote by $m$ the number of features for which the expert can give target-specific information (that is, the corresponding weight to be considered in estimating the drug response of the target patient). We refer to $m$ as the \textit{feedback budget} and we restrict it to a value much lower than the dimensionality of the data: $m << p$. Therefore, assuming the user does have the answer, the research problem we address here is that of identifying the $m$ most informative features on which to elicit expert knowledge.

As our goal is to predict the drug response for patient $x^*$, it follows that the most informative feedback the expert can provide is what leads to minimizing the prediction error. We formalize this in the performance measure defined below.

\subsection{Performance Measure}

Let $\htheta_{\A}$ denote the estimate of $\theta^*$ produced by an algorithm $\A$. We define the loss of $\A$ as the expected quadratic loss for the target patient $x^*$: 
\begin{align}\label{eq:loss_target}
L_{\A}= \E [(x^{*} \htheta_{\A}^\top - y^*)^2] = \E [(x^{*}\htheta_{\A}^\top - x^{*}\theta^{*\top})^2],
\end{align}
where the expectation is taken over all sources of noise, coming from the  noisy drug effect observations and noisy selection of the features by the expert. Given the $m$ interactions with the expert user, our goal is to get feedback about the most \textbf{informative or relevant} features, such that we minimize the target loss. The optimal algorithm $\mathcal A^*$ is thus defined as:
%
$L_{\A^*} = \min_{\A | \htheta_\text{init}} L_{\A}.$
%
It is important to highlight that the overall performance of an algorithm strictly depends on the estimate obtained from the training data on other patients $\htheta_\text{init}$.  In this work the focus is on finding expert knowledge elicitation strategies that enable maximally improving the target prediction, from the initially very imperfect estimate obtained from scarce data, possibly coming from different distributions.

\section{Algorithm}\label{s:algo}

We propose to learn the regression parameters in two stages. First, an initial estimate $\htheta_{\text{init}}$ is learnt on the ``small $n$ large $p$'' training data with appropriate regularization, efficiently capturing the information in that data set. The estimate is then improved by knowledge elicitation from an expert. The estimation error obtained with the initial estimate $\htheta_{\text{init}}$ is the baseline for comparing the potential improvement obtained by eliciting expert knowledge.

\subsection{Description}

Given the assumed type of feedback (described in Section~\ref{s:expert_knowledge}) and the budget constraint, the goal is to rapidly identify the features most useful in reducing the loss. After a closer look to the loss definition, we easily get the following intuitions on the behaviour of the algorithm:
\begin{itemize}
\item Trivially, the algorithm should not ask feedback more than once on the same feature $i$, since the expert feedback is accurate and we get directly the right value for $\theta^*(i)$.
\item By decomposing the loss as a sum of $p$ squared point-wise products, we can tell that it is most useful to reduce to 0 (through  the expert feedback) the $m$ largest feature products $x^*(i) \cdot \left(\theta^* - \htheta_{\text{init}}\right) (i)$, which would correspond to getting feedback on the worst estimated features in $\htheta_{\text{init}}$.
\end{itemize}
Of course, the true vector $\theta^*$ is not available and the best proxy we can have for it is given by  $\htheta_{\text{init}}.$ Therefore, we propose to use the available information on $x^*$ and the estimate of the weight vector, and we ask for expert feedback on the $m$ largest features, as given by the point-wise product of $x^*$ and the $\htheta_{\text{init}}$.

For each feature $j=1,\dots,m$, we replace the coordinate $\htheta_{\text{init}}(j)$ by the corresponding expert feedback. The rest of the $p-m$ features remain the same.  The algorithm outputs the vector thus obtained. We denote the result by $\hat \theta_{\text{final}}$ and the loss will be given by
%
$L= \E [(x^{*}\htheta_{\text{final}}^\top - x^{*}\theta^{*\top})^2]$. 
%
The pseudo code of the algorithm is presented in Algorithm~\ref{alg:expert-elicitation-regression}. 
\begin{algorithm}
\begin{algorithmic}
\STATE \textbf{Input:} $ \{\X, \Y \}$ -- $n$ training data samples in $\mathbb R^p$ \\
~~~~~~~~~~~$x^*$ -- feature vector representing the target\\
~~~~~~~~~~~$m$ -- expert elicitation feedback budget\\
\STATE \textbf{Initialization phase}\\
Output: $\htheta_{\text{init}}$ -- estimated weight vector from data
\STATE \textbf{Expert elicitation phase}
\STATE Rank point-wise products of features
\FOR {$t = 1, \dots, m$}  
	\STATE Elicit feedback on each of the $t$-th largest product feature 
    \STATE Get the corresponding $\theta^*$ coordinate: 
    $r = \theta^*(i)$
	\STATE Replace feature in the initial estimate: 
	$\htheta(i)= r$
\ENDFOR
\STATE Output $\hat \theta_{\text{final}}$
\STATE \textbf{Goal:} minimize~ $L = ( x^{*\top} (\hat \theta_{\text{final}} - \theta^*))^2$.
\end{algorithmic}
\caption{}
\label{alg:expert-elicitation-regression}
\end{algorithm}
%

\subsection{Analysis}\label{s:proof}

We will next give conditions under which the elicitation sequence of Algorithm~\ref{alg:expert-elicitation-regression} is optimal. Denote $\Delta_i = x^*(i) \theta(i) - x^*(i) \theta^*(i)$, and index the feature with the largest product of feature value and regression weight with $c$. The theorem below says essentially that assuming changes in regression parameters from learning set to the target patient are on average monotonically increasing as a function of the parameter size, choosing the largest product $c$ decreases cost function more than any other choice on the average. An additional requirement is that if there are correlations between parameters, they cannot be stronger for others compared to $c$. The averages are over variation in the learning data set.

\begin{theorem}
Denote $c=\argmax_k x^*(k) \theta(k)$ and assume $E[\Delta_c^2] \geq E[\Delta_i^2]$ and $E[\Delta_c\Delta_k] \geq E[\Delta_i\Delta_k]$ for all $i \neq k \neq c$. Then it holds that
\begin{align*}
    L_{\theta^{c*}} \leq L_{\theta^{i*}},
\end{align*}
where $L_{\theta} = E[(x^{*}\theta^\top-y^*)^2]$ and $\theta^{i*}=[\theta(1),\ldots,\theta(i-1),\theta^*(i),\theta(i+1),\ldots,\theta(p)]^\top$.
\end{theorem}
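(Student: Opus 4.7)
The plan is to expand the quadratic loss as a sum over coordinates of $\Delta_k = x^*(k)\theta(k) - x^*(k)\theta^*(k)$, and then show that replacing the $c$-th coordinate zeroes out a term whose contribution to the loss dominates the corresponding contribution from replacing any other coordinate $i$. The expectation is taken over the randomness of the initial estimate $\theta = \htheta_{\text{init}}$ (the target vector $x^*$ and the truth $\theta^*$ are fixed).

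First I would write $y^* = x^*\theta^{*\top} + \eta'$ with $\eta'$ zero-mean noise independent of the learning set, and expand
\begin{align*}
L_\theta \;=\; E[(x^*\theta^\top - y^*)^2] \;=\; E\!\left[\Big(\textstyle\sum_k \Delta_k(\theta)\Big)^{\!2}\right] + \sigma^2,
\end{align*}
where $\Delta_k(\theta) = x^*(k)(\theta(k)-\theta^*(k))$. For the modified vector $\theta^{i*}$, the coordinate-wise definition of $\theta^{i*}$ gives $\Delta_i(\theta^{i*})=0$ and $\Delta_k(\theta^{i*}) = \Delta_k$ for $k\neq i$. Writing $S = \sum_k \Delta_k$ this yields
\begin{align*}
L_{\theta^{i*}} - \sigma^2 \;=\; E\!\left[(S-\Delta_i)^2\right] \;=\; E[S^2] - 2\,E[\Delta_i S] + E[\Delta_i^2].
\end{align*}

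Next I would form the difference between the $i$-th and $c$-th choices. Since $E[S^2]$ cancels,
\begin{align*}
L_{\theta^{i*}} - L_{\theta^{c*}} \;=\; 2\,E[(\Delta_c - \Delta_i)S] \;-\; \big(E[\Delta_c^2] - E[\Delta_i^2]\big).
\end{align*}
Expanding $E[(\Delta_c-\Delta_i)S] = \sum_k (E[\Delta_c\Delta_k] - E[\Delta_i\Delta_k])$ and splitting off the $k=c$ and $k=i$ terms, the cross term at $k=c$ contributes $E[\Delta_c^2]-E[\Delta_c\Delta_i]$ and the one at $k=i$ contributes $E[\Delta_c\Delta_i]-E[\Delta_i^2]$, so these sum to $E[\Delta_c^2]-E[\Delta_i^2]$. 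After substitution and collection of terms one obtains the clean identity
\begin{align*}
L_{\theta^{i*}} - L_{\theta^{c*}} \;=\; \big(E[\Delta_c^2]-E[\Delta_i^2]\big) \;+\; 2\!\!\sum_{k\neq c,\,i}\!\big(E[\Delta_c\Delta_k] - E[\Delta_i\Delta_k]\big).
\end{align*}
At this point both hypotheses of the theorem directly force non-negativity: the first bracket is $\geq 0$ by the variance assumption $E[\Delta_c^2]\geq E[\Delta_i^2]$, and every summand in the sum is $\geq 0$ by the cross-moment assumption $E[\Delta_c\Delta_k]\geq E[\Delta_i\Delta_k]$. Hence $L_{\theta^{c*}} \leq L_{\theta^{i*}}$.

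The proof is essentially algebraic, so the main obstacle is bookkeeping rather than conceptual: being careful that the expectation is over the distribution of $\theta$ (not over $x^*$), that the noise $\eta'$ in $y^*$ cleanly separates out as an additive $\sigma^2$, and that the $k=c$ and $k=i$ terms in the cross-moment sum are handled so that they recombine with the variance term to produce the identity above. One minor subtlety to flag is that the theorem's statement uses the single-coordinate replacement (iteration $t=1$ in Algorithm~\ref{alg:expert-elicitation-regression}); for the full $m$-step procedure, the same identity can be applied inductively after the already-replaced coordinates are removed from $S$, but the statement as given only asserts optimality of the greedy first pick.
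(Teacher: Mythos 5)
Your proof is correct and follows essentially the same route as the paper's: expand $L_\theta = E[(\sum_k \Delta_k)^2]$, note that replacing coordinate $i$ removes $\Delta_i$ from the sum, and reduce the difference $L_{\theta^{i*}}-L_{\theta^{c*}}$ to $\bigl(E[\Delta_c^2]-E[\Delta_i^2]\bigr) + 2\sum_{k\neq c,i}\bigl(E[\Delta_c\Delta_k]-E[\Delta_i\Delta_k]\bigr)$, which the two hypotheses make non-negative. The only (harmless) additions are your explicit separation of the observation noise as an additive $\sigma^2$ that cancels in the difference, and the remark that the statement concerns only the first greedy pick.
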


%
%

\begin{proof}
Since $L_\theta = E[(\sum_k \Delta_k)^2]$,
\begin{align*}
    L_{\theta^{i*}} &= E[(\sum_{k\neq i} \Delta_k)^2] \\
                    &= L_\theta - 2 E[\Delta_i \sum_k \Delta_k] + E[\Delta_i^2].
\end{align*}
Hence,
\begin{align*}
    L_{\theta^{c*}} - L_{\theta^{i*}} = E[\Delta_c^2] - E[\Delta_i^2] - 2E[(\Delta_c -\Delta_i) \sum_k \Delta_k].
\end{align*}
The expression within the last expectation is
\begin{align*}
    (\Delta_c - \Delta_i) \sum_k \Delta_k &= (\Delta_c - \Delta_i)(\Delta_c + \Delta_i + \sum_{k \neq c,i} \Delta_k)\\
    &= \Delta_c^2 - \Delta_i^2 + (\Delta_c - \Delta_i) \sum_{k \neq c,i} \Delta_k,
\end{align*}
and therefore
\begin{align*}
    L_{\theta^{c*}} - L_{\theta^{i*}} &= 
    E[\Delta_i^2] - E[\Delta_c^2] 
    - 2 \sum_{k \neq c,i} (E[\Delta_c \Delta_k] - E[\Delta_i \Delta_k]) 
    \leq 0
\end{align*}
by the two assumptions.
\end{proof}

\paragraph{Optimality in a simple setting.}

We next illustrate the theorem in the simple setting where the target patient comes from the same distribution as the other patients. Then under reasonable regularity assumptions $E[\theta] = \theta^*$, and
\begin{align*}
    E[\Delta_i^2] &=  
    x^2(i) E[\theta^2(i)] -2 x^2(i) E[\theta(i)] \theta^*(i) + x^2(i) \theta^{*2}(i) \\
    & = x^2(i) (E[\theta^2(i)] - \theta^{*2}(i)) \\
    & = Var[x(i)\theta(i)].
\end{align*}
Skipping analogous details,
\begin{align*}
    E[\Delta_i \Delta_k] = x(i)x(k)(E[\theta(i) \theta(k)] - \theta^*(i)\theta^*(k)).
\end{align*}
Hence the assumptions translate in this case to the intuitive requirements that variance in the largest features is the largest, and features are either not cross-correlated, or if they are, cross-correlations with the largest feature are the strongest (on average).

\section{Experiments}\label{s:experiments}

We illustrate the performance of Algorithm~\ref{alg:expert-elicitation-regression}, to which we refer in the sequel by \textsc{Largest Product Feature}, in two experimental setups. We start with a simple synthetic setting (described in Section~\ref{s:synthetic}), then we use a genomics dataset for a more elaborate simulation (as described in Section~\ref{s:dataset}). In both settings we compare the loss of Algorithm~\ref{alg:expert-elicitation-regression} to that of the following strategies:
\begin{itemize}
    \item \textsc{No interaction}: the baseline algorithm whose performance is given by the prediction error of $\htheta_{\text{init}}$
    \item \textsc{Random}: works by selecting at random (without repeat) $m$ features of which to ask expert feedback
    \item \textsc{Largest Target Feature}: asks feedback on the $m$ largest coordinates of the target vector $x^*$. 
\end{itemize}
While \textsc{No interaction} and \textsc{Random} are typical baselines, \textsc{Largest Target Feature} is a naive approach of minimising the target loss, based solely on the absolute feature values of the target vector.

\subsection{Synthetic Data}\label{s:synthetic}

\paragraph{Setting.}
We randomly generated the training set $X$ having $k=1000$ rows and 150 features from a normal distribution with mean 0 and variance 1. We also randomly sampled a sparse weight vector $\theta$, such that 5 of its features are non-zero and come from a normal distribution with mean 0 and variance 1, while the remaining 145 features are 0. The output variable $Y$ is then computed using $n$ noisy observations of the dot product between $\theta$ and randomly selected vectors $x \in \X$. We use the glmnet package~\cite{friedman2010regularization} for estimating $\htheta_{init}$, and we vary the number of training samples from  5 to 30, while the number of expert feedbacks that we assume we can obtain grows from 0 to 10. We randomly choose a target patient and compute the corresponding loss for each feedback value. We plot the average target loss over 100 randomly selected target patients. 

\paragraph{Results.} Figure~\ref{fig:synthetic_single_theta} shows the prediction performance in terms of a loss for a target sample in the ``small $n$'' scenario, with four different strategies. As expected, given that the expert is assumed to be able to give exact feedback, all strategies that use expert feedback show improvement in performance as the number of expert feedback grows and have better performance than the baseline (\textsc{No Interaction}) from the very first expert feedback. It can also be noticed that the initial estimate of the weight vector resulting using \textsc{Largest Product Feature}, after the first feedback is already better then the other strategies. Then, with increasing amount of feedbacks received, our algorithm shows increased improvement in performance as compared to other strategies. 

In the ``$n=1$'' scenario, with different regression parameter vectors for each patient, the improvement is slightly slower (because of the introduced bias in the initial estimate $\htheta_{\text{init}}$) but still clearly better than with the other strategies (Figure~\ref{fig:synthetic_multi_theta}). It is important to mention that the $\theta$s ($\theta_1, \dots, \theta_k$, one corresponding to each patient in $X$) were randomly generated, but to control the introduced bias we keep the same sparsity assumption for all weight vectors (the same 5 nonzero features) and we restrict the $L_2$ norm of the difference between any pair of weight vectors $\theta_i \neq \theta_j$ to be smaller than 0.5.

For both scenarios, here we only report the loss when the number of training samples used for computing $\theta_{\text{init}}$ is 10, but in Appendix~\ref{s:appendix} we report more complete results.

\begin{figure}[h]
\centering
    \subfigure[Small $n$ scenario]{\label{fig:synthetic_single_theta}\includegraphics[width=0.45\textwidth,keepaspectratio]{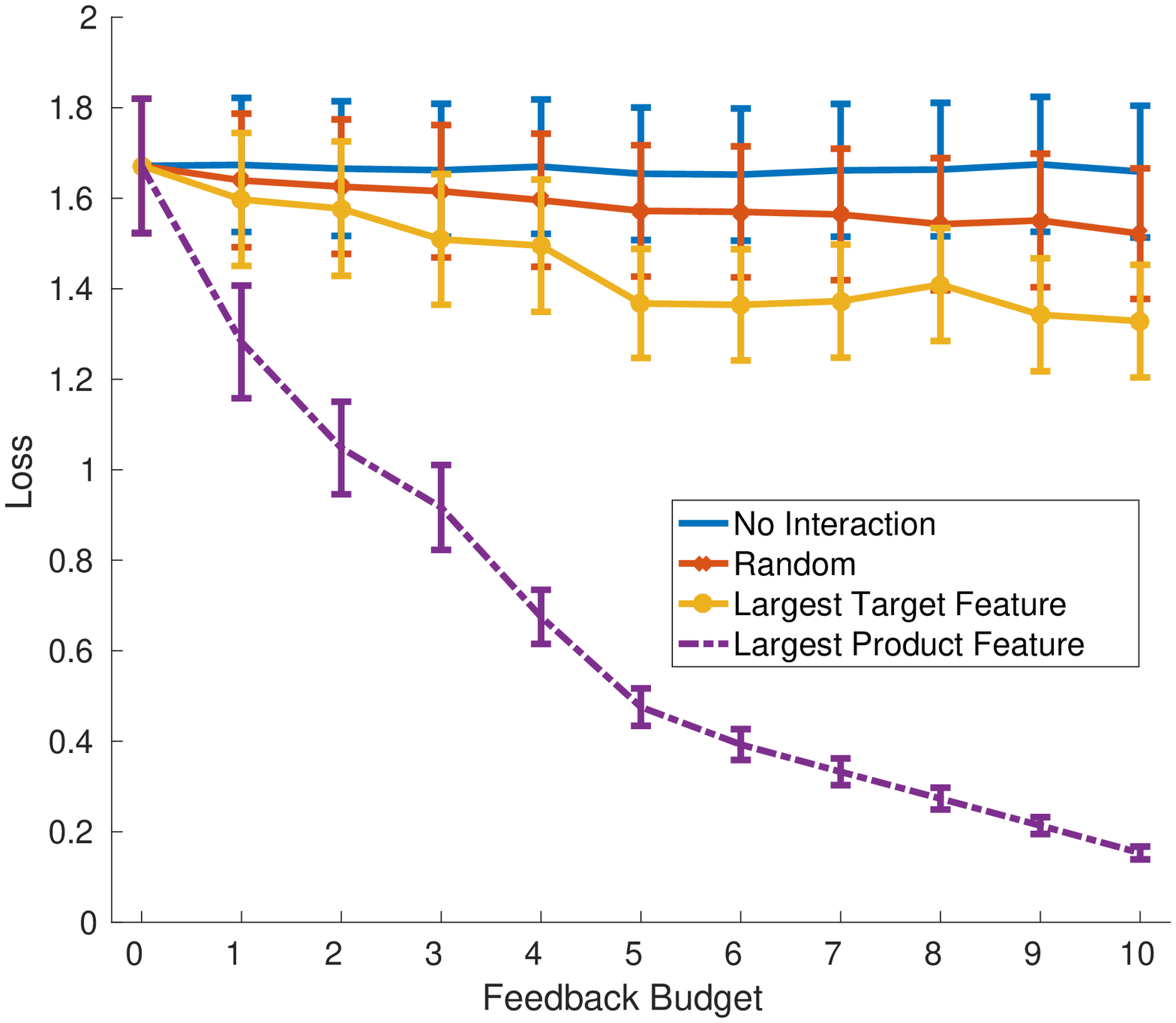}
		}
    \subfigure[$n=1$ scenario]{\label{fig:synthetic_multi_theta}
        \includegraphics[width=0.45\textwidth,keepaspectratio]{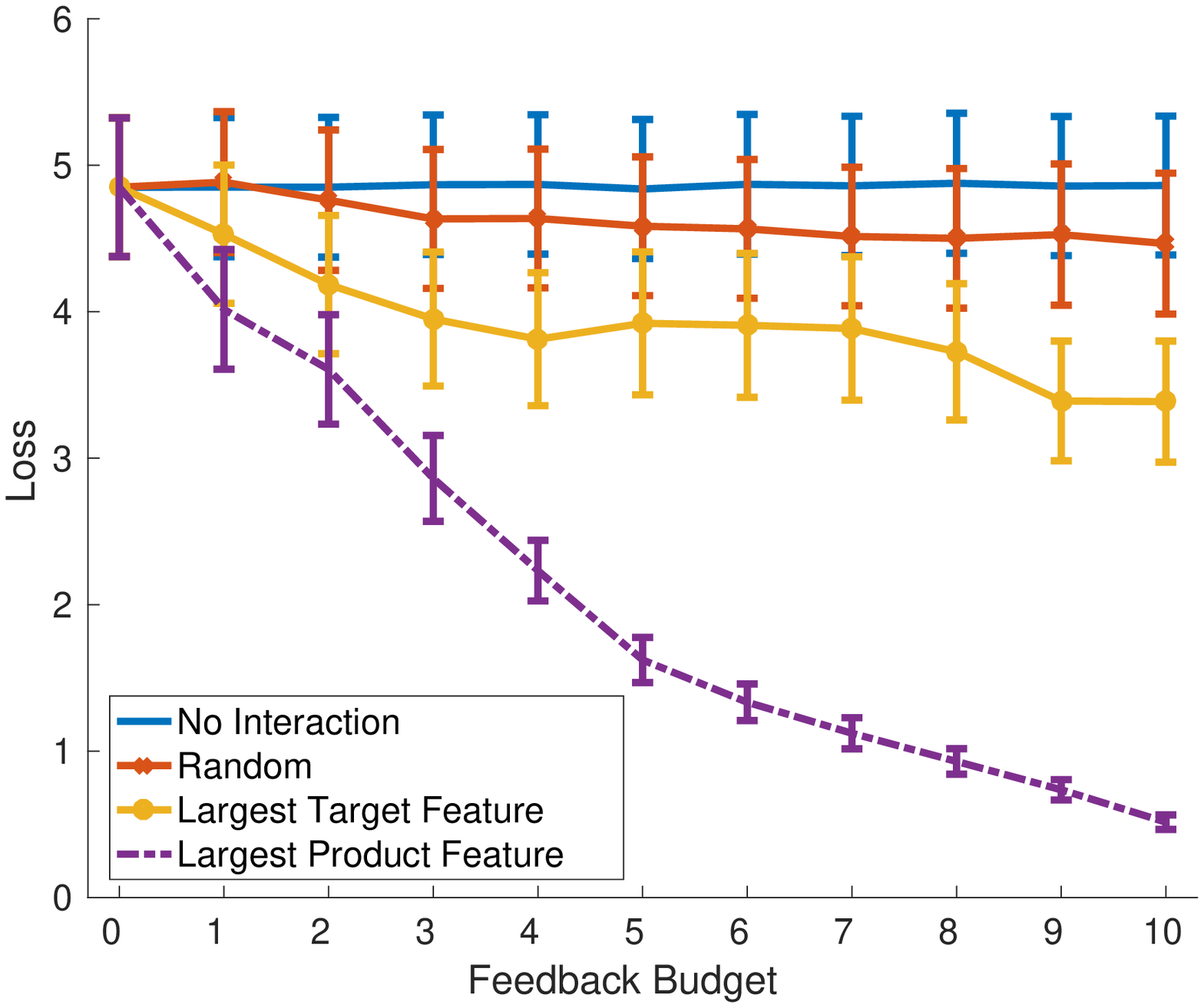}
    }
\caption{Loss for one target patient in the ``small $n$'' scenario where each patient has a same parameter vector \ref{fig:synthetic_single_theta}, respectively loss for one target patient in the scenario where each patient has a different parameter vector \ref{fig:synthetic_multi_theta}. X-axis denotes the feedback budget and Y-axis the loss in predictions, averaged over separate predictions for each target patient. The curves show the mean loss and the bars the standard error of the mean.}
\end{figure}

\subsection{GDSC Dataset}\label{s:dataset} 

We demonstrate the usefulness of our approach by also testing it on the genomics data in the GDSC dataset. Here we obtained results similar to the results seen from the synthetic data. In the following we briefly describe the contents of the dataset, then we explain how we simulated the ground truth. The plotted results (Figure~\ref{f:realdata1}) follow the same trends as in the synthetic data, again showing an improvement of all the strategies that use interaction, for all values of $n$.

\paragraph{Genomics of Drug Sensitivity in Cancer (GDSC) data.}

We used the data from the Genomics of Drug Sensitivity in Cancer project by Wellcome Trust Sanger Institute (version release 5.0, June 2014, \url{http://www.cancerrxgene.org}) \cite{yang2013genomics,garnett2012systematic} consisting of $124$ drugs and a panel of $124$ human cancer cell lines for which complete drug response measurements are available. Drug responses are summarised by log-transformed IC50 values (the drug concentration yielding 50\% response, given as natural log of $\mu M$) from the dose response data measured at 9 different concentrations. The cancer cell lines, representative of human cancer cell models, are characterised by expression values quantifying the transcript levels of thousands of genes. For this study, we chose a subset of biologically relevant genes, whose mutational status has been shown to correlate with the drug responses~\cite{WelTSanIns}. For our study, we transformed the transcript counts of the genes to the $\log2$ scale. 

\paragraph{Learning a Pseudo-Ground Truth.} When simulating expert feedback, on this data we used estimates computed from the full data as the correct answers the expert gives, when queried based on estimates from small $n$.
To learn this ``pseudo-ground truth,'' we employed sparse linear regression using the {\it glmnet} package, which has been frequently used to identify genomic features of drug responses~\cite{barretina2012cancer,garnett2012systematic}. The sparse linear regression formulation has two parameters that are to be optimized: $\alpha$ (elastic net mixing parameter) and $\lambda$ (the penalty parameter). We fixed $\alpha=1$ assuming only few of the genomic features to be important for predictions. We inferred $\lambda$ as follows:
For each drug, we held out one cell line and performed a 10-fold cross validation procedure on the training data with 100 different $\lambda$ values. The training data comprised of the gene expression values of all cell lines and their responses on this drug, except for the held-out cell line. 
We chose $\lambda_{min}$ that gave minimum error averaged over the 10 cross-validated folds. The estimates of $\theta$ were obtained by setting $\lambda=\lambda_{min}$. We repeated this procedure for all drugs, thus the learnt $\theta$s are used as a pseudo-ground truth for our experiments.     

\begin{figure*}
\hspace{-0.8cm}
\includegraphics[trim={0 6cm 0 0},width=1\textwidth,keepaspectratio]{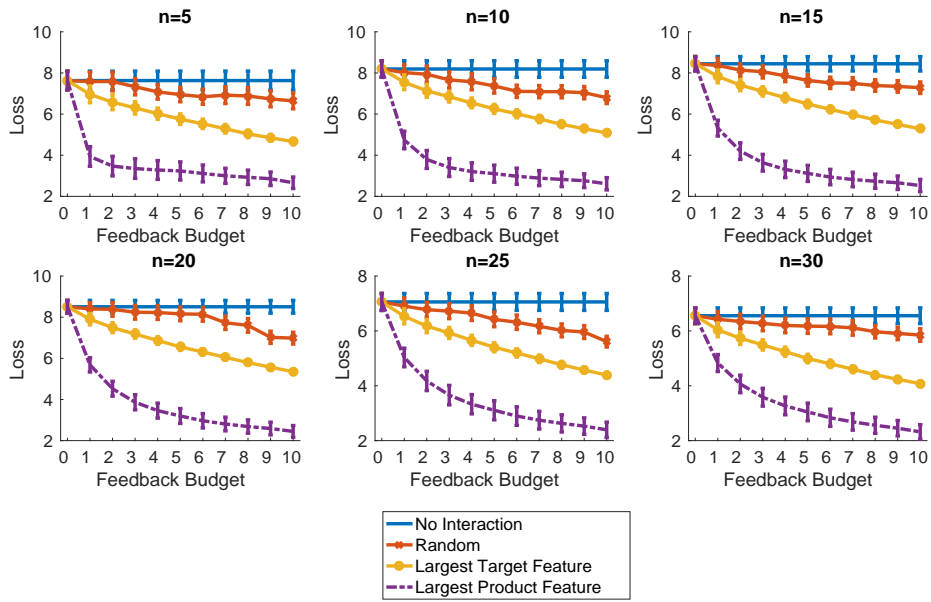}
\caption{GDSC dataset: Loss for one target patient in the scenario where each patient has a different parameter vector $\theta$. In each subplot, X-axis denotes the feedback budget and Y-axis the loss in predictions, averaged over separate predictions for each target patient. The curves show the mean loss and the bars the standard error of the mean. $n$ denotes the number of small sample sizes used for computing $\hat\theta_{\text{init}}$. \label{f:realdata1}}
\end{figure*} 

\paragraph{Results.} For the plots on the GDSC dataset, for each amount of feedback and for each algorithm, the curve represents the average loss over 100 random iterations. The bars are showing the standard error of the mean. In each iteration, we randomly pick a set of 10 target patients and 10 drugs and we compute their corresponding losses. For each of the target patients we predicted the drug responses eliciting the expert knowledge of the user following the strategies presented in the beginning of Section~\ref{s:experiments}. We also show the effect of varying the number of initial training samples, from 5 to 30.

While the trends and the ordering of the performance of the algorithms do not change, we can notice that, as expected, the overall loss of the algorithms diminishes as $n$ grows. In addition, when using the GDSC dataset the loss does not decrease to zero as fast as for the synthetic data. 

\section{Model Extensions}\label{s:extensions}

To get more evidence about the behaviour of the algorithm, we proceed by relaxing the model in two central aspects. These relaxations also make the model applicable in a considerably wider set of practical scenarios. For the numerical simulations, unless specified otherwise, the setting remains the same as described in Section~\ref{s:synthetic}, for the ``n=1'' scenario.

\subsection{Feedback on a subset of features}\label{s:extension_subset} 

We analysed the prediction performance of our approach when the expert can provide feedback only on a subset of features. To simulate that, we associate to each feature a value of either 1, meaning that the expert has knowledge on that feature (and can provide feedback), or 0, for the features on which the expert has no prior knowledge. The algorithm \textsc{Largest Product (Subset) Feature} selects the features on which to ask feedback as before, but if the selected feature is unknown by the expert (that is, its associated value is 0), then feedback is asked on the next ``largest product feature'' with an associated value 1. Basically, instead of receiving feedback on the $m$ largest features, we now receive expert feedback on the $m$ ``largest product features'' with associated value 1. 

The analysis in Section~\ref{s:proof} also holds for this more general setting, under the same assumptions. In fact, we again proceed by asking expert feedback on the features that allow to decrease the cost function more than any other choice (on average), since the features on which the expert has no knowledge would have no impact on the loss (if selected). Thus the algorithm preserves optimality, since by using the feedback budget on the $m$ ``largest product feature'' with an associated value 1, we obtain the largest expected loss reduction.

Figure~\ref{f:synthetic_subset_feedback} shows how the performance behaves when the percentage of features with associated value 1 is reduced from 90\% to 50\%.\footnote{Note that the plot for ``Feature Subset=100\%'' would be precisely the Figure~\ref{fig:synthetic_multi_theta} (with overlapping green and magenta curves).} 
Each subplot corresponds to a different proportion of features (selected randomly) on which the expert has prior knowledge and can provide feedback. For instance ``Feature Subset=90\%'' means that expert feedback is available on 90\% of the features in the curve \textsc{Largest Product (Subset) Feature}.

Based on the results, even in the more realistic cases in which the expert knowledge is restricted to a subset of the features, the results obtained by \textsc{Largest Product (Subset) Feature} are significantly better than the baseline \textsc{No Interaction}. Note that the other strategies remain unchanged, with no constraint on the features on which expert feedback can be received, and yet \textsc{Largest Product (Subset) Feature} outperforms them even with half of the total number of features. 

\begin{figure*}
\hspace{1.3cm}
\includegraphics[width=0.72\textwidth]{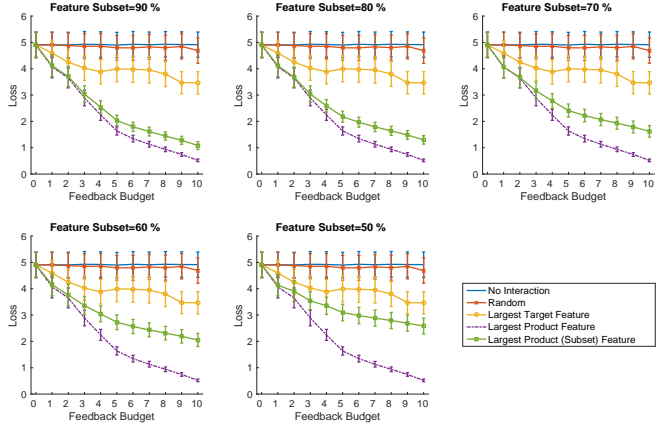}
\caption{The effect of expert having knowledge on only a subset of features. The curves show the mean loss in predictions, for one target patient in the scenario where we consider a different (but similar) $\theta$ corresponding to each patient. We plot the loss averaged over 100 randomly selected target patients, obtained with 10 training samples drawn using the synthetic data. X-axis denotes the feedback budget, and the bars show the standard error of the mean.}
\label{f:synthetic_subset_feedback}
\end{figure*}

\subsection{ Noisy Expert}\label{s:extension_noisy}  

We simulated an experiment for the ``n=1'' scenario where each feedback is affected by normally distributed noise, centered and with variance between 0.1 and 0.5, encoding the expert uncertainty (the range of the (true) feature values is [0, 1]). The noisy feedback is used for all strategies. In the personalized medicine application, it is plausible to assume that the expert feedback has smaller variance in the weight of genomic features commonly known to be relevant, but much higher variance for rarely encountered genes and their mutations. 

In Figure~\ref{f:synthetic_noisy_expert} we can see that \textsc{Largest Product Feature} is still better than the baseline \textsc{No Interaction} in the presence of noisy feedback. Though as the noise increases the difference between the prediction performance of our algorithm and the baseline decreases. This is expected, since as the expert provides noisier feedback on a feature $i$, $\hat\theta(i)_{\text{final}}$ might deviate even more from the true value of feature $i$ than the $\hat\theta(i)_{\text{init}}$.

Regarding the other strategies, we can see that when the expert feedback has a variance greater than 0.3, the baseline has a better performance than \textsc{Random}, which asks feedback on features chosen at random (without repeat). For \textsc{Largest Target Feature} the difference with the baseline is even more significant. We can notice a decrease in performance with every additional feedback, for a noise variance of more than 0.3. This happens because when computing the loss, for the parameter features whose value is changed (from $\hat\theta_{\text{init}}(i)$ to $\hat\theta_{\text{final}}(i)$), the introduced noise is then multiplied with one of the $m$ largest features of the target patient $x(i)$. For this strategy, the noisy feedbacks propagate the most and decrease the performance, leading to results worse than \textsc{Random}.

Although a natural extension, strategies require much larger feedback budgets to obtain effective expert knowledge elicitation when the feedback is very noisy. In fact, the effect of a noisy feedback naturally reduces if one asks feedback multiple times on the same feature. But this would also imply a change in the assumptions about the structure in the data or about the number of experts that can be consulted. The design of an optimal strategy becomes more intricate in this scenario since it involves choosing the right balance between (a) obtaining noisy information about more coefficients, or (b) focusing on a smaller number of coefficients for which feedback is asked multiple times. This trade-off might also vary depending on the target, since its features will then propagate the uncertainty of the coefficients into the loss.

\begin{figure*}
\centering
\includegraphics[width=0.7\textwidth]{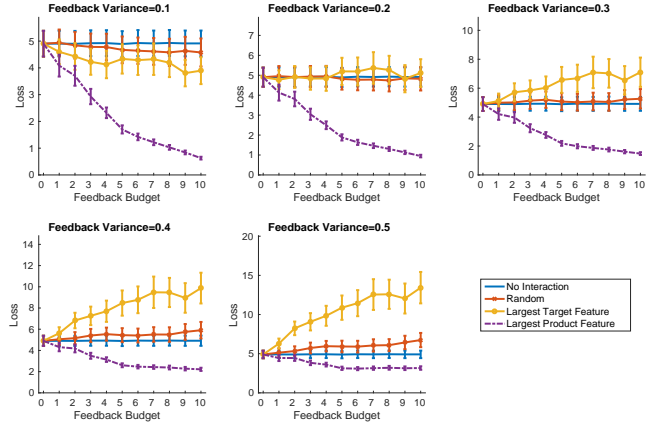}
\caption{The effect of feedback from a noisy expert on the loss for one target patient, when 10 samples from the synthetic data have been used to obtain $\hat\theta_{\text{init}}$. X-axis denotes the feedback budget and Y-axis the loss in predictions, averaged over separate predictions from 100 randomly selected target patients. The curves show the mean loss and bars the standard error of the mean. Each subplot corresponds to a different noise variance introduced in the feedback.}
\label{f:synthetic_noisy_expert}
\end{figure*}

\section{Conclusion}~\label{s:conclusions}

We have introduced a novel setup that brings together expert elicitation and the difficult ``small n, large p'' regression problem. Starting from noisy estimates based on extremely small sample sizes, we empirically demonstrated the prediction improvement that can be obtained by bringing in only a few expert feedbacks. More precisely, we considered a simplified problem formulation, where there is a strict budget constraint on exact expert feedback. The simplified problem setting is intended to be a starting point that opens up both new interesting theoretical questions and a line of applied work towards new solutions in the currently very timely problem of personalized medicine. Underlying the practically important goal of developing better predictions of treatment outcome for an individual patient, is the task of estimating predictors for the sample with $n=1$, which requires creative solutions. New approaches of querying and incorporating available expert knowledge are naturally expected to have much wider applicability.

For future work, a sensible formulation for the ``small n, large p'' regression problem is to find the optimal expert queries for reducing the interval uncertainty for regression coefficients, with strategies recently studied and applied  in reliability analysis problems \cite{Expert-elicitation-interval-uncertainty}. Another particularly appealing future formulation is adaptive expert feedback elicitation, where after each feedback, the estimate is updated and the next feature is sampled taking into account the current estimate. Similar expert interaction approaches were shown to be effective for user intent modelling in~\cite{Ruotsalo2013}. Lastly, for the elicitation method proposed here, we intend to run a full-blown user study and test the actual assessments from experts.

\bibliographystyle{plain} 

\begin{thebibliography}{10}

\bibitem{azari2013preference}
Hossein Azari~Soufiani, David~C Parkes, and Lirong Xia.
\newblock Preference elicitation for general random utility models.
\newblock In {\em Uncertainty in Artificial Intelligence: Proceedings of the
  29th Conference}, pages 596--605. AUAI Press, 2013.

\bibitem{barretina2012cancer}
Jordi Barretina, Giordano Caponigro, Nicolas Stransky, Kavitha Venkatesan,
  Adam~A Margolin, Sungjoon Kim, Christopher~J Wilson, Joseph Leh{\'a}r,
  Gregory~V Kryukov, Dmitriy Sonkin, et~al.
\newblock The cancer cell line encyclopedia enables predictive modelling of
  anticancer drug sensitivity.
\newblock {\em Nature}, 483(7391):603--607, 2012.

\bibitem{Expert-elicitation-interval-uncertainty}
Nadia Ben~Abdallah and Sebastien Destercke.
\newblock {Optimal expert elicitation to reduce interval uncertainty}.
\newblock In AUAI press, editor, {\em {Uncertainty in Artificial
  Intelligence}}, Uncertainty In Artificial Intelligence (UAI) 2015, pages
  12--22, Amsterdam, Netherlands, July 2015.

\bibitem{busa2014preference}
R{\'o}bert Busa-Fekete, Eyke H{\"u}llermeier, and Bal{\'a}zs Sz{\"o}r{\'e}nyi.
\newblock Preference-based rank elicitation using statistical models: The case
  of mallows.
\newblock In {\em Proceedings of the 31st International Conference on Machine
  Learning (ICML-14)}, pages 1071--1079, 2014.

\bibitem{conati2002using}
Cristina Conati, Abigail Gertner, and Kurt Vanlehn.
\newblock Using bayesian networks to manage uncertainty in student modeling.
\newblock {\em User modeling and user-adapted interaction}, 12(4):371--417,
  2002.

\bibitem{flores2011incorporating}
M~Julia Flores, Ann~E Nicholson, Andrew Brunskill, Kevin~B Korb, and Steven
  Mascaro.
\newblock Incorporating expert knowledge when learning {B}ayesian network
  structure: a medical case study.
\newblock {\em Artificial intelligence in medicine}, 53(3):181--204, 2011.

\bibitem{friedman2010regularization}
Jerome Friedman, Trevor Hastie, and Rob Tibshirani.
\newblock Regularization paths for generalized linear models via coordinate
  descent.
\newblock {\em Journal of statistical software}, 33(1):1, 2010.

\bibitem{garnett2012systematic}
Mathew~J Garnett, Elena~J Edelman, Sonja~J Heidorn, Chris~D Greenman, Anahita
  Dastur, King~Wai Lau, Patricia Greninger, I~Richard Thompson, Xi~Luo, Jorge
  Soares, et~al.
\newblock Systematic identification of genomic markers of drug sensitivity in
  cancer cells.
\newblock {\em Nature}, 483(7391):570--575, 2012.

\bibitem{garthwaite1988quantifying}
Paul~H Garthwaite and James~M Dickey.
\newblock Quantifying expert opinion in linear regression problems.
\newblock {\em Journal of the Royal Statistical Society. Series B
  (Methodological)}, pages 462--474, 1988.

\bibitem{john@regularization}
Iain~M Johnstone and D~Michael Titterington.
\newblock Statistical challenges of high-dimensional data.
\newblock {\em Philosophical Transactions of the Royal Society of London A:
  Mathematical, Physical and Engineering Sciences}, 367(1906):4237--4253, 2009.

\bibitem{kadane1980interactive}
Joseph~B Kadane, James~M Dickey, Robert~L Winkler, Wayne~S Smith, and Stephen~C
  Peters.
\newblock Interactive elicitation of opinion for a normal linear model.
\newblock {\em Journal of the American Statistical Association},
  75(372):845--854, 1980.

\bibitem{masegosa2013interactive}
Andr{\'e}s~R Masegosa and Seraf{\'\i}n Moral.
\newblock An interactive approach for {B}ayesian network learning using
  domain/expert knowledge.
\newblock {\em International Journal of Approximate Reasoning},
  54(8):1168--1181, 2013.

\bibitem{OHagan06}
Anthony O'Hagan, Caitlin~E. Buck, Alireza Daneshkhah, J.~Richard Eiser, Paul~H.
  Garthwaite, David~J. Jenkinson, Jeremy~E. Oakley, and Tim Rakow.
\newblock {\em Uncertain Judgements. Eliciting Experts' Probabilities}.
\newblock Wiley, Chichester, England, 2006.

\bibitem{OLeary09}
Rebecca~A. O'Leary, Samantha~Low Choy, Justine~V. Murray, Mary Kynn, Robert
  Denham, Tara~G. Martin, and Kerrie Mengersen.
\newblock Comparison of three expert elicitation methods for logistic
  regression on predicting the presence of the threatened brush-tailed
  rock-wallaby \protect\emph{Petrogale penicillata}.
\newblock {\em Environmetrics}, 20:379--398, 2009.

\bibitem{PanY09TKDE}
Sinno~Jialin Pan and Qiang Yang.
\newblock A survey on transfer learning.
\newblock {\em IEEE Transactions on Knowledge and Data Engineering},
  22(10):1345--1359, October 2010.

\bibitem{Ruotsalo2013}
Tuukka Ruotsalo, Giulio Jacucci, Petri Myllym\"{a}ki, and Samuel Kaski.
\newblock Interactive intent modeling: Information discovery beyond search.
\newblock {\em Commun. ACM}, 58(1):86--92, December 2014.

\bibitem{regularization}
Robert Tibshirani.
\newblock Regression shrinkage and selection via the lasso.
\newblock {\em Journal of the Royal Statistical Society. Series B
  (Methodological)}, pages 267--288, 1996.

\bibitem{WelTSanIns}
{Wellcome Trust Sanger Institute}.
\newblock {The mutational status for genes that are frequently mutated in
  cancer were correlated with drug sensitivity data using a multivariate
  analysis of variance (MANOVA)}.
\newblock \url{http://www.cancerrxgene.org/translation/Gene}, 2016.
\newblock Accessed: 2016-02-26.

\bibitem{yang2013genomics}
Wanjuan Yang, Jorge Soares, Patricia Greninger, Elena~J Edelman, Howard
  Lightfoot, Simon Forbes, Nidhi Bindal, Dave Beare, James~A Smith, I~Richard
  Thompson, et~al.
\newblock {Genomics of Drug Sensitivity in Cancer (GDSC): a resource for
  therapeutic biomarker discovery in cancer cells}.
\newblock {\em Nucleic acids research}, 41(D1):D955--D961, 2013.

\bibitem{zhang2016expert}
Guizhen Zhang and Vinh~V Thai.
\newblock Expert elicitation and {B}ayesian network modeling for shipping
  accidents: A literature review.
\newblock {\em Safety Science}, 87:53--62, 2016.

\end{thebibliography}


\newpage

\appendix

\section{Supplemental Empirical Results}~\label{s:appendix}
%
\vspace{-0.5cm}
\begin{figure}[H]
\centering
\includegraphics[trim={0 5.5cm 0 0},width=0.8\textwidth,keepaspectratio]{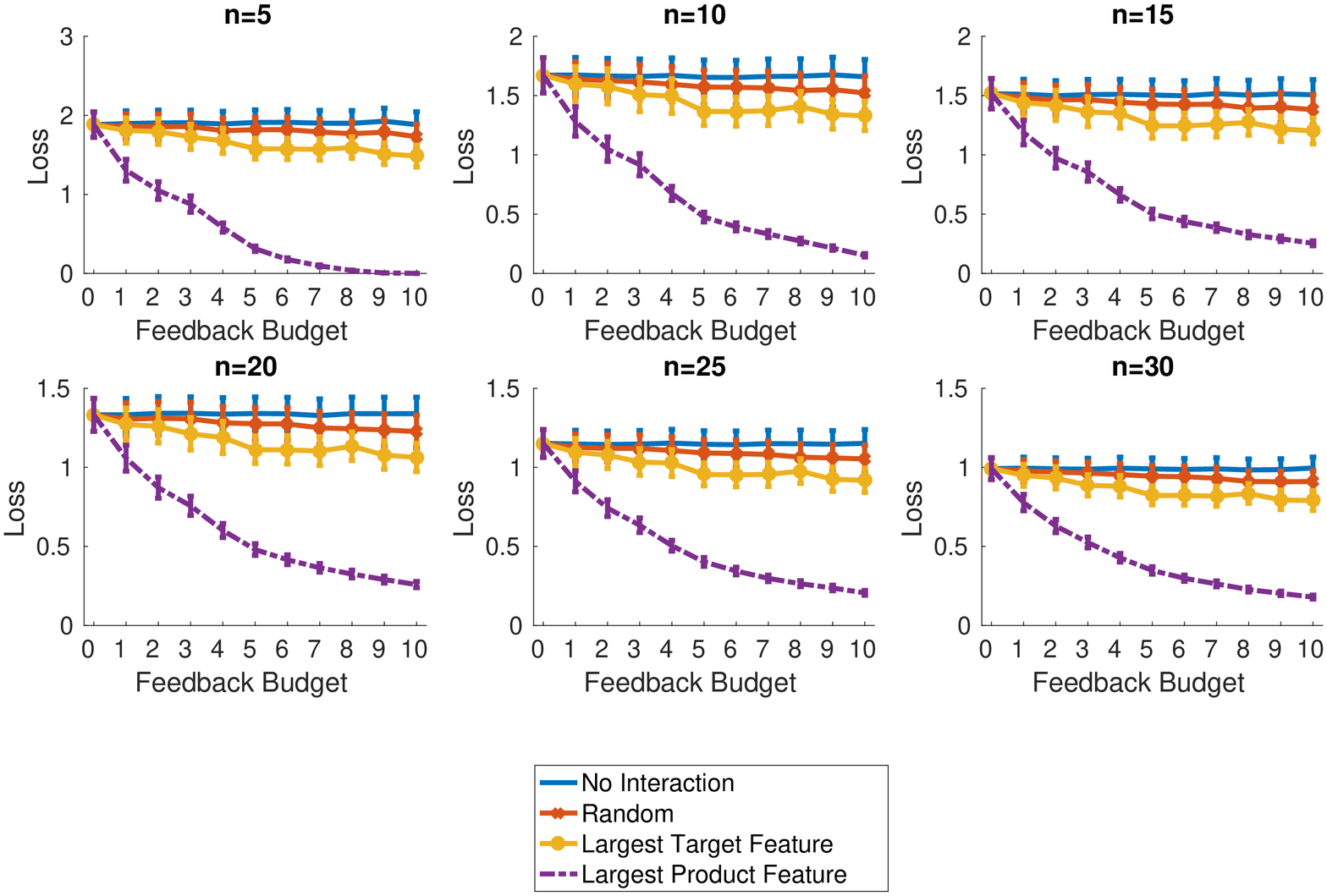}
\caption{Loss for one target patient in the ``small $n$'' scenario where the patients have the same parameter vector. X-axis denotes the feedback budget and Y-axis the loss in predictions, averaged over separate predictions for each target patient. The curves show the mean loss and the bars the standard error of the mean. Each subplot corresponds to the number of samples $n$ used in estimating $\htheta_{init}$~($n$ from 5 up to 30 samples).}
\label{f:synthetic3}
\end{figure}
\begin{figure}[H]
\centering
\includegraphics[trim={0 5.5cm 0 0},width=0.8\textwidth,keepaspectratio]{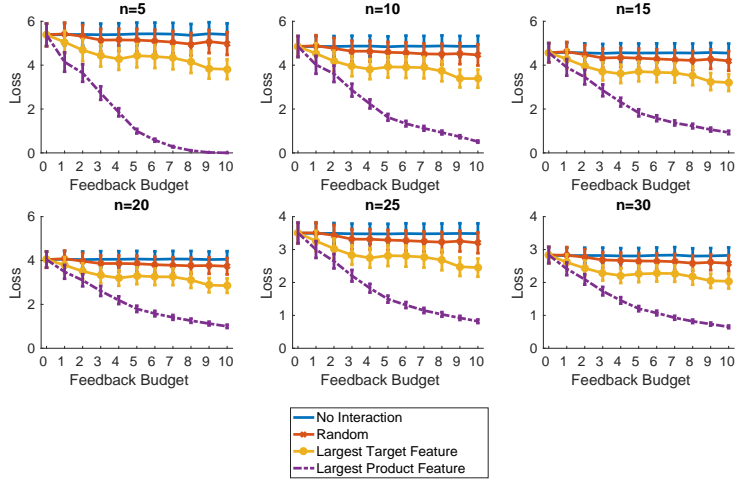}
\caption{Loss for one target patient in the scenario where for each $x_i$ there is a different weight vector $\theta_i$.  X-axis denotes the feedback budget and Y-axis the loss in predictions, averaged over separate predictions for each target patient. The curves show the mean loss and the bars the standard error of the mean. Each subplot corresponds to the number of samples $n$ used in estimating $\htheta_{init}$.}
\label{f:synthetic4}
\end{figure}
\vspace{-0.5cm}
\end{document}